\newif\ifabstract
\newif\iffull
\newcommand{\myparskip}{3pt}
\par\vspace{4mm}}
\newcommand{\be}{\begin{equation}}
\newcommand{\ee}{\end{equation}}
\newcommand{\bse}{\begin{subequations}}
\newcommand{\ese}{\end{subequations}}
\def\bee#1\eee{\begin{align}#1\end{align}}
\newcommand{\nnb}{\nonumber}
\newtheorem{theorem}{Theorem}[section]
\newtheorem{lemma}[theorem]{Lemma}
\newenvironment{proof}{\par \smallskip{\bf Proof:}}{\hfill\stopproof}
\def\stopproof{\square}
\def\square{\vbox{\hrule height.2pt\hbox{\vrule width.2pt height5pt \kern5pt
\vrule width.2pt} \hrule height.2pt}}
\renewcommand{\phi}{\varphi}
\mathchardef\hyphen="2D
\newcommand{\rev}[1]{{\color{red}#1}}
\newcommand{\com}[1]{\textbf{\color{blue} (COMMENT: #1)}}
\newcommand{\rev}[1]{#1}
\newcommand{\com}[1]{}
\newcommand{\rew}{\textsf{REW}\xspace}
\newcommand{\ew}{\textsf{EW}\xspace}
\begin{document}

\title{Recursive Exponential Weighting for Online Non-convex Optimization}
\author{Lin Yang\thanks{Information Engineering Department, The Chinese University of Hong Kong. Email: {\tt yl015@ie.cuhk.edu.hk}.} \and Cheng Tan \thanks{Information Engineering Department, The Chinese University of Hong Kong. Email: {\tt tancheng1987love@163.com}.} \and Wing Shing Wong \thanks{Information Engineering Department, The Chinese University of Hong Kong. Email: {\tt wswong@ie.cuhk.edu.hk}.}}

\maketitle

\begin{abstract}
  In this paper, we investigate the \emph{online non-convex optimization} problem which generalizes the classic \emph{online convex optimization} problem by relaxing the convexity assumption on the cost function.
  For this type of problem, the classic \emph{exponential weighting} online algorithm has recently been shown to attain a sub-linear regret of $O(\sqrt{T\log T})$.
  In this paper, we introduce a novel recursive structure to the online algorithm to define a \emph{recursive exponential weighting algorithm}
 that attains a regret of $O(\sqrt{T})$, matching the well-known regret lower bound.
  To the best of our knowledge, this is the first online algorithm with provable $O(\sqrt{T})$ regret
  for the online non-convex optimization problem.
\end{abstract}

\section{Introduction} \label{sec:introduction}
The Online Convex Optimization (OCO) framework has  widely influenced  the online learning community
since the seminal work by Zinkevich \cite{Zinkevich03}.
OCO is modeled as a repeated game composed of $T$ iterations.  At iteration $t$, the player chooses a point $\boldsymbol{x}_{t}$ from a bounded convex decision set
$\mathcal{K}\subset \mathbb{R}^{n}$; after the choice is committed, a bounded convex cost function $f_{t}: \mathcal{K}\mapsto \mathbb{R}$ is revealed to the player.
The goal of the player is to minimize the \textbf{regret}, which is defined to be the difference
between the online cumulative cost and the cumulative cost using an optimal offline choice in hindsight.
This model can be applied to many real-world problems, such as online routing \cite{Awerbuch08}, ad selection for search engines \cite{Wauthier13}
and spam email filtering \cite{Guzella09,Sculley07}, \textit{etc}.
It is well known that the tight lower bound of the regret for the OCO problem is $O(\sqrt{T})$ \cite{Hazan16} and
researchers have proposed many different online algorithms whose regret attains this lower bound,
including the Online Gradient Decent (OGD) method \cite{Zinkevich03}, the Stochastic Gradient Decent (SGD) method \cite{robbins1951,rakhlin2012making,shamir2013stochastic,hazan2014beyond}, the online Newton step method and many regularization-related methods \cite{kivinen1998,grove2001} (see the survey paper \cite{Hazan16}).
If one further assumes that the cost function $f_t$ is strictly convex, the regret can even be reduced to $O(\log T)$ \cite{Hazan2007}.

For the OCO problem, one of the most natural extensions is to relax the convexity assumption on the cost function, i.e., $f_t$ is allowed to be non-convex. This extension brings out the online non-convex optimization problem, which is necessitated by some important applications.
For example, in the portfolio selection problem \cite{Cover91,kalai2002}, the decision maker (e.g., the trader)
chooses a distribution of her wealth allocation over $n$ assets $\bm{x}_{t}$ at each round.
At the ending of every round, the adversary chooses the market returns for the assets with positive values.
Due to non-convex diversification constraints and non-convex transaction costs, the online portfolio selection problem would be non-convex \cite{uryasev2000conditional,krokhmal2002portfolio,quaranta2008robust,ardia2010differential},
and thus the traditional OCO framework fails in modeling such case. For more examples of non-convex applications, one can refer to \cite{Ertekin2011} \cite{Gasso2011} which discuss non-convex online Support Vector Machines (SVMs) \cite{zhang2005gene} and non-convex Neyman-Pearson classification, respectively.

Online non-convex optimization is not a new problem and there are plenty prior works on it reported in the literature.  Among them,
\cite{Ertekin2011} and \cite{Gasso2011} provided respective heuristic online algorithms, but neither of them are rigorously shown to satisfy any regret bound.
In \cite{Hazan2012}, Hazan and Kale considered a special online non-convex optimization problem
where the cost function is assumed to be submodular.  For such a cost function, their proposed online algorithm can attain the regret of $O(\sqrt{T})$.
In \cite{Zhang15}, the authors investigated an online bandit
learning problem with non-convex losses. The cost function is again a special non-convex function,
defined as the composition of a non-increasing scalar function with a linear function of small variation.
They developed an online algorithm of $\tilde{O}(\textsf{poly}(d)T^{2/3})$ regret, where $\textsf{poly}(d)$ stands for a polynomial that takes the dimension of the decision set, $d$,
as argument.
The works that are most related to ours are those by Krichene \cite{Krichene15} and Maillard \cite{Maillard2010}.
Both of them applied the exponential weighting method to attain a regret of $O(\sqrt{T\log T})$.
To the best of our knowledge, no online algorithm with regret that achieves the well-known lower bound for the online non-convex
optimization problem has been reported until now.

%
%

This paper fills in this blank by proposing a novel online algorithm, called Recursive Exponential Weighting (\rew) and proving that it can attain the tight lower regret bound.
The idea of \rew is to divide the decision set 
into multiple subsets according to a \emph{layered} structure.
Any subset in the upper layer is divided into smaller subsets in the lower layer.
\rew recursively selects the subset from the  top layer to the  bottom layer until a decision point is identified.
In each layer, \rew uses the traditional Exponential Weighting (\ew) method \cite{Auer02} to select a subset in the lower payer. By properly partitioning subsets and setting the
subset-selecting probabilities, we prove that our new proposed \rew online algorithm can asymptotically attain a regret of
$O(\sqrt{T})$, which is the lower bound of the regret. Therefore, \rew is asymptotically optimal for the general online
non-convex optimization problem.

\section{Problem Setting of Online Non-Convex Optimization}
Similar to the OCO framework, our online non-convex optimization problem can be seen as a structured repeated game.
At each iteration $t$, the player is required to choose a decision $\boldsymbol{x}_{t}$
from a continuous and bounded decision set $\mathcal{K}\subseteq \mathbb{R}^n$.
After the player commits to a decision point at slot $t$, the adversary chooses
a cost function $f_{t}(\boldsymbol{x})$ from $\mathcal{F}$. $\mathcal{F}$ is a bounded family of cost functions $f_{t}(\boldsymbol{x}):~\mathcal{K}\mapsto [0,B]$, which are assumed to be non-negative and
Lipschitz continuous with parameter $L>0$, i.e.,
\begin{equation}
|f_{t}(\boldsymbol{x})-f_{t}(\boldsymbol{y})|\leq L\|\boldsymbol{x}-\boldsymbol{y}\|_2, \quad \forall \boldsymbol{x},\boldsymbol{y}\in
\mathcal{K}.  \label{equ:cost-function-Lipschitz}
\end{equation}
Note that in the OCO setting, the cost function is required to be convex, while the model in our paper takes into account
a more general class of cost functions including both convex and non-convex cost functions.

The whole cost function $f_{t}(\boldsymbol{x})$ is revealed
to the player only after a choice is made at time slot $t$.

At each time slot, the player needs to make decisions in an online fashion without knowing the current and future cost functions.
The common performance metric to evaluate any online algorithm is
the pseudo-regret\footnote{We call it regret in short in the rest of this paper.}, defined as
\begin{equation}
\textsf{regret}\xspace_{T} \overset{\text{def}}{=} \sup_{f_{1},\ldots,f_{T}\in \mathcal{F}}\left\{\sum_{t=1}^{T}f_{t}(\bm{x}_{t})-\min_{x\in \mathcal{K}}\sum_{t=1}^{T}f_{t}(\bm{x})\right\},
\end{equation}
which is the cumulative difference between the cost of the online algorithm and the cost of the best fixed offline decision.

Our goal is to design online algorithms to our online non-convex problem
and try to minimize the regret. Later on in Sec.~\ref{sec:rew},
we design an online algorithm, called Recursive Exponential Weighting (\rew), and we show that
its regret is $O(\sqrt{T})$ asymptotically, which attains the lower bound of the regret \cite{Hazan16}.



\section{Recursive Exponential Weighting Online Algorithm} \label{sec:rew}
In this section, we propose a novel weighting method, which is called the Recursive Exponential Weighting (\rew).
Intuitively, \rew is based on the conceptual idea of grouping highly correlated decisions into one set and
adopt a \textit{divide-and-conquer} method.
Before we solve the general online non-convex problem,
we first introduce how to discretize the decision set $\mathcal{K}$ in Sec.~\ref{subsec:set-discretization}.

\subsection{Set Discretization} \label{subsec:set-discretization}
In this subsection, we introduce a very straightforward method to discretize the decision set $\mathcal{K}$.

Because the decision set $\mathcal{K}$ is bounded, we can find a bounded cube of length $D$, denoted by $\mathcal{D}$, that can cover $\mathcal{K}$ entirely. As shown in Figure \ref{fig:non-regular-decision-set}, we partition $\mathcal{D}$ into smaller equal-size sub-cubes with edge length being $\frac{D}{2^{m}}$.
$m$ specifies the granularity of set discretization.
Assume the decision set is $n$-dimensional. The total number of sub-cubes is equal to $2^{mn}$. For simplicity, each sub-cube is indexed by a distinct $n$-dimensional vector $\boldsymbol{i}=(i_{1},i_{2},\cdots,i_{n})$, where $1\leq i_{j}\leq 2^{m}, j=1,2,\cdots,n$. In this way, a sub-cube can be denoted by $\mathcal{D}_{\boldsymbol{i}}$.
We denote the index set for sub-cubes which have overlap with the decision set $\mathcal{K}$ by $\mathcal{I}$, \textit{i.e.},
\begin{equation*}
\mathcal{I}\overset{\text{def}}{=}\{\boldsymbol{i}:~\mathcal{D}_{\boldsymbol{i}}\cap\mathcal{K}\neq \emptyset \}.
\end{equation*}

Fig.~\ref{fig:correlated-bandit-problem-setting} illustrates an example of index set $\mathcal{I}$ with $n=2$ and $m=3$, whose elements correspond to the overlapped sub-cubes in Fig. \ref{fig:non-regular-decision-set}.

In each overlapped sub-cube $\mathcal{D}_{\boldsymbol{i}}$, we randomly choose an overlapped point as the ``representative'' of $\mathcal{D}_{\boldsymbol{i}}$. At time slot $t$, once a sub-cube $\mathcal{D}_{\boldsymbol{i}}$ is chosen, the representative point associated with $\mathcal{D}_{\boldsymbol{i}}$ will be chosen as the final decision of the online algorithm. Correspondingly, the cost on the representative point, denoted by $c_t(\bm{i})$, will be incurred.

Under the above discretization method, the choices of the online algorithm will be reduced to the finite discrete set $\mathcal{I}$, and correspondingly, the original problem reduces to the classic expert problem with $|\mathcal{I}|$ experts. Certainly, the optimal point in hindsight may not lie among the representative points, so the above discretization method may result in an extra regret loss to the online algorithm. When we partition the decision space into very small sub-cubes, the cumulative cost of the optimal choice among representative points will approximate the optimal decision point over $\mathcal{K}$.

Because the cost function is Lipschitz continuous with parameter $L$, we have that
\begin{equation*}
|c_t(\bm{p}) - c_t(\bm{q})| \le L_{\mathsf{d}} ||\bm{p}-\bm{q}||_1,~~\bm{p},\bm{q}\in \mathcal{I},
\label{equ:discrete-Lipschitz-condition}
\end{equation*}
where $L_{\mathsf{d}}=2\sqrt{n}DL/(2^m)$ and $||\bm{p}-\bm{q}||_1 \overset{\text{def}}{=} \sum_{j=1}^{n} |p_j-q_j|$ is the one norm of vector $\bm{p}-\bm{q}$..

\begin{figure*}
\hspace{-0.5cm}
  \begin{minipage}[t]{0.01\linewidth}~~~
  \end{minipage}
  \begin{minipage}[t]{0.31\linewidth}
    \centering
    \includegraphics[width = \linewidth]{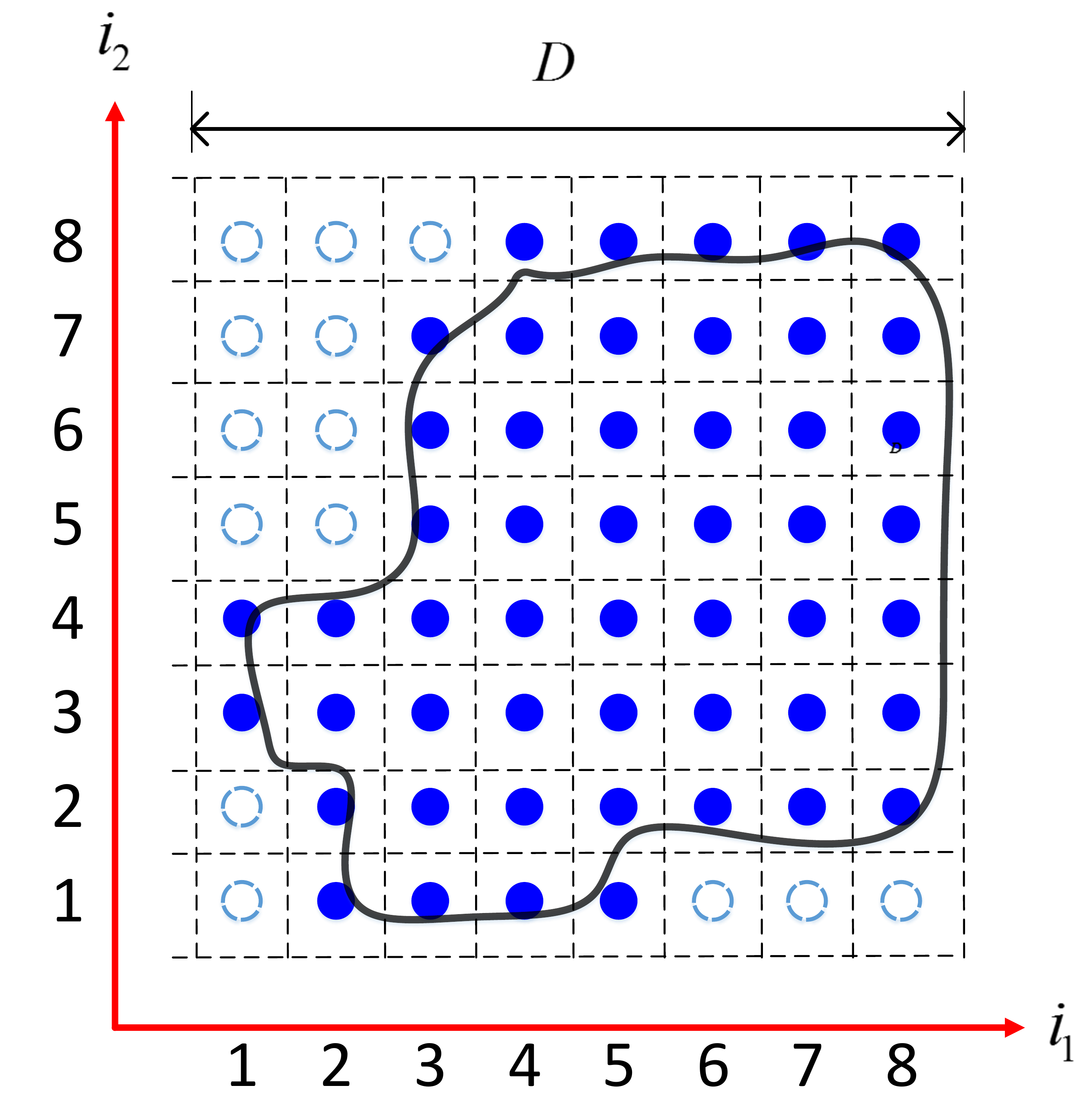}\\
  \caption{Set discretization for a general continuous decision set.}
  \label{fig:non-regular-decision-set}
  \end{minipage}
  \begin{minipage}[t]{0.01\linewidth}~~~
  \end{minipage}
    \begin{minipage}[t]{0.01\linewidth}~~~
  \end{minipage}
  \begin{minipage}[t]{0.31\linewidth}
    \centering
    \includegraphics[width = \linewidth]{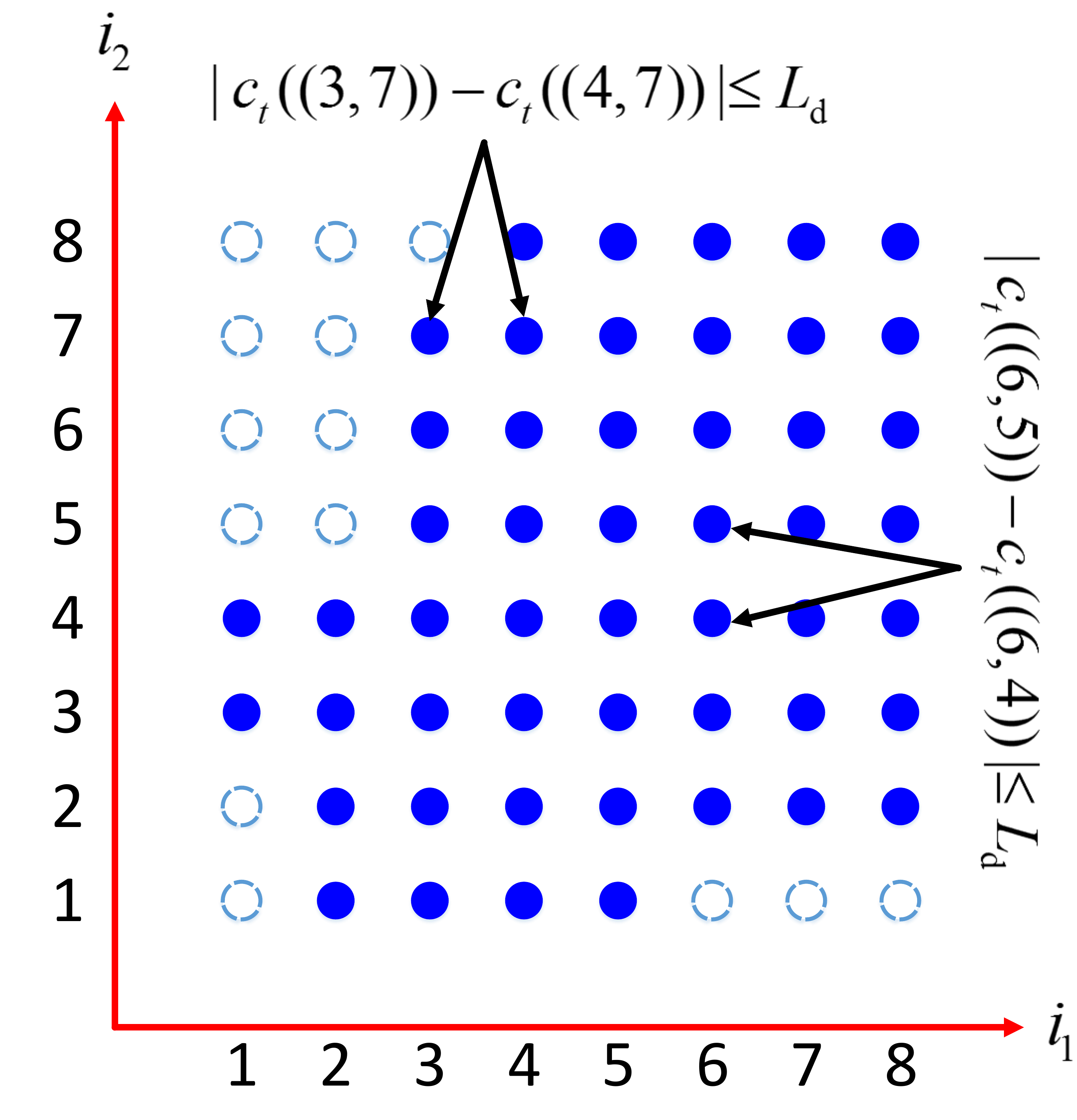}\\
   \caption{The index set for the overlapped sub-cubes when $n=2$ and $m=3$.}
  \label{fig:correlated-bandit-problem-setting}
  \end{minipage}
  \begin{minipage}[t]{0.01\linewidth}~~~
  \end{minipage}
  \begin{minipage}[t]{0.31\linewidth}
    \centering
    \includegraphics[width = \linewidth]{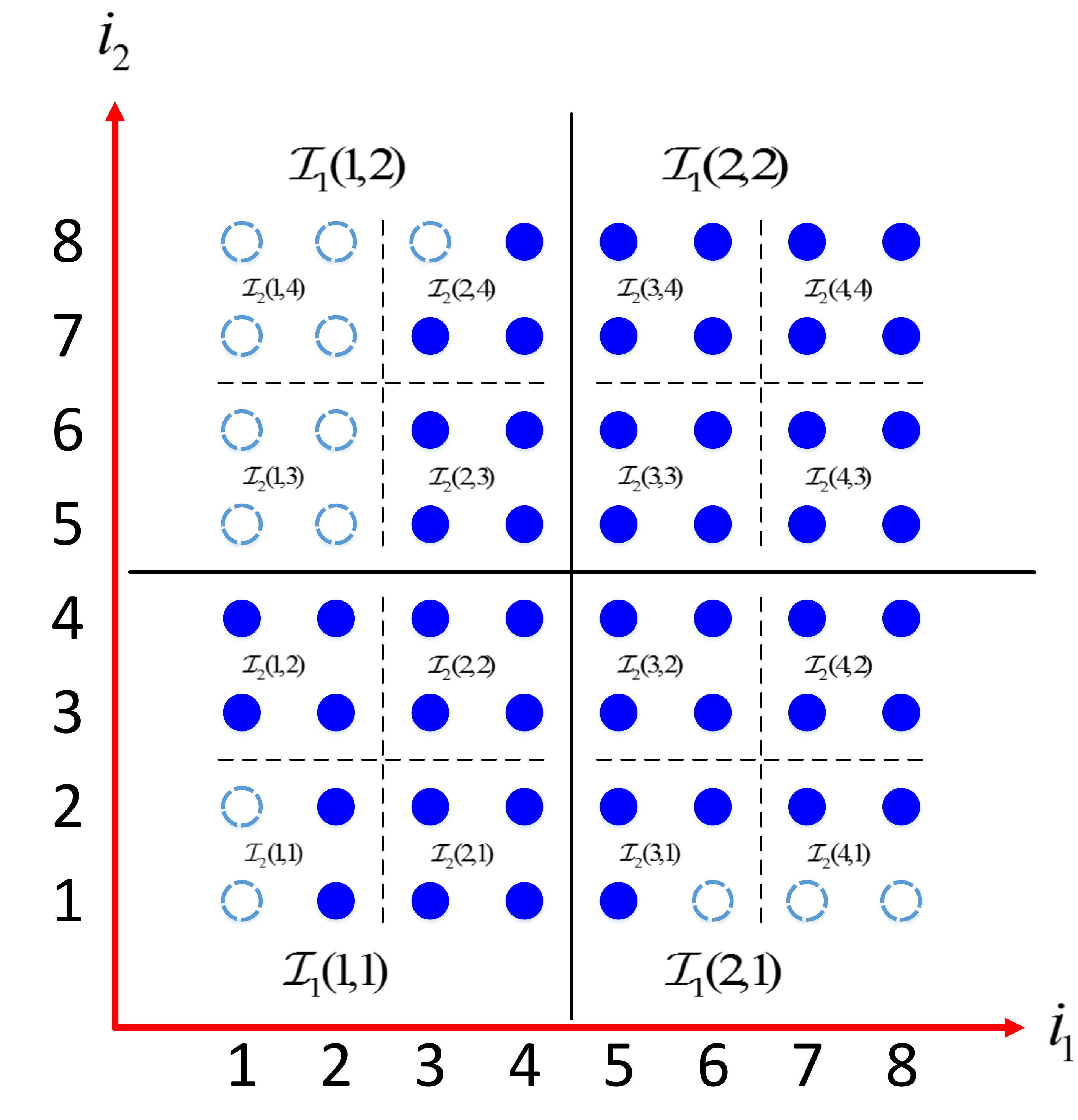}\\
  \caption{Set partition for the correlated bandit problem when $n=2$ and $m=3$.}
  \label{fig:correlated-bandit-problem-set-parition}
  \end{minipage}
  \begin{minipage}[t]{0.01\linewidth}~~~
  \end{minipage}
\end{figure*}

\subsection{Set Partition for the index set $\mathcal{I}$} \label{subsec:set-partition}
We do a set partition for the index set $\mathcal{I}$ according to different layers.
In layer $l \in [1,2,\cdots, m]$, a subset $\mathcal{I}_{l}(\boldsymbol{i})$ is defined to contain a group of neighbouring points, specifically,
\begin{equation*}
\begin{split}
&\mathcal{I}_{l}(\boldsymbol{i}) \overset{\text{def}}{=} \{\boldsymbol{p}\in \mathcal{I}:1+ (i_{j}-1)2^{m-l} \leq p_{j}\leq 2^{m-l} + (i_{j}-1)2^{m-l},j=1,2,\cdots,n\}.
\end{split}
\end{equation*}
where $\boldsymbol{i}\in \{(i_{1},i_{2},\ldots,i_{n}):1 \le i_1, i_2, \cdots, i_{n} \le 2^{l}\}$ denote the index of a subset in layer $l$. Then, the total $|\mathcal{I}|$ points are divided into $2^{nl}$ $l$-layer subsets whose size is at most $2^{(m-l)n}$.
For example, when $n=2$ and $m=3$, arms can be partitioned into $4$ subsets in layer $l=1$ or $16$ subsets in layer $l=2$,
as depicted in Fig.~\ref{fig:correlated-bandit-problem-set-parition}. Note that some subsets may be empty.

By convention, we regard the whole index set $\mathcal{I}$ as the only layer-0 subset, denoted by
$\mathcal{I}_{0}(\boldsymbol{1})=\mathcal{I}$. We should also note that the layer-$m$ subset contains at most one point, i.e.,
$|\mathcal{I}_{m}(\boldsymbol{i})|\leq 1$.

For simplicity, we use notation $\mathcal{U}_{l}(\boldsymbol{i})$ to denote the layer-$l$ subset containing a non-empty lower-layer subset $\mathcal{I}_{l+1}(\boldsymbol{i})$, \textit{i.e.},
\rev{
\begin{equation*}
\mathcal{U}_{l}(\boldsymbol{i})\overset{\text{def}}{=} \left\{\boldsymbol{j}: \mathcal{I}_{l+1}(\boldsymbol{j})\subset \mathcal{I}_{l}(\boldsymbol{i}),~\mathcal{I}_{l+1}(\boldsymbol{j})\neq \emptyset\right\}.
\end{equation*}
}
We use $\mathcal{M}_{l}(\boldsymbol{k})$ to denote the layer-$l$ subset containing point $\boldsymbol{k} \in \mathcal{I}$. We use $\mathcal{D}_{l}(\boldsymbol{i})$ to denote the index set of non-empty layer-$(l+1)$ subsets within $\mathcal{I}_{l}(\boldsymbol{i})$, \textit{i.e.},
\begin{equation}
\mathcal{D}_{l}(\boldsymbol{i})\overset{\text{def}}{=} \left\{\boldsymbol{j}: \mathcal{I}_{l+1}(\boldsymbol{j})\subset \mathcal{I}_{l}(\boldsymbol{i}),~\mathcal{I}_{l+1}(\boldsymbol{j})\neq \emptyset\right\}.
\end{equation}

\subsection{\rev{The Recursive Exponential Weighting (\rew) Online Algorithm}} \label{subsec:REW-algorithm}
In the online learning field, the expert problem \cite{cesa1997use} is a classical problem.
A general idea to attain a sublinear regret for the expert problem is to give more preference
to the expert with smaller cumulative cost in a stochastic manner \cite{Littlestone1994},
which can be implemented by the idea of Exponential Weighting.
In the subsection, we propose our novel Recursive Exponential Weighting online algorithm.
In each iteration, \rew chooses an point from $\mathcal{I}$ by recursively choosing a non-empty subset from the top layer ($l=0$)
to the bottom layer ($l=m$). In each layer, \rew uses the idea of exponential weighting to determine the probability of selecting a subset.
More specifically, if a subset $\mathcal{I}_l(\boldsymbol{s})$ in the $l$-th layer has been chosen,
then \rew chooses a non-empty subset on the $(l+1)$-th layer within $\mathcal{I}_l(\boldsymbol{s})$.
Unlike the Hedge algorithm whose choosing probability is based on the cumulative cost,
the choosing probability in \rew is based on the \emph{cumulative expected normalized cost}.
After revealing the cost of all points in iteration $t$,
the \emph{expected normalized cost} of subset $\mathcal{I}_{l+1}(\boldsymbol{i})$ (assume $\mathcal{I}_{l+1}(\boldsymbol{i})\neq \emptyset$)
at iteration $t$ is defined as
\begin{equation*}
\begin{split}
&\bar{c}_{l+1,t}(\boldsymbol{i}) \overset{\text{def}}{=} \mathbb{E}\left[\frac{c_{t}(\boldsymbol{I}_{t})-\min_{\boldsymbol{k}\in \mathcal{U}_{l}(\boldsymbol{i})}c_{t}(\boldsymbol{k})}{n2^{m-l}L_{\mathsf{d}}}|\boldsymbol{I}_{t}\in \mathcal{I}_{l+1}(\boldsymbol{i})\right]\\
=&\sum_{\boldsymbol{k}\in\mathcal{I}_{l+1}(\boldsymbol{i})}\!\!\frac{c_{t}(\boldsymbol{k})\!-\!\min_{\boldsymbol{k}\in \mathcal{U}_{l}(\boldsymbol{i})}c_{t}(\boldsymbol{k})}{n2^{m-l}L_{\mathsf{d}}}\cdot\Pr\left[\boldsymbol{I}_{t}\!=\!\boldsymbol{k}|\boldsymbol{I}_{t}\in \mathcal{I}_{l+1}(\boldsymbol{i})\right],
\end{split}
\end{equation*}
where $\Pr\left[\boldsymbol{I}_{t}=\boldsymbol{k}|\boldsymbol{I}_{t}\in \mathcal{I}_{l+1}(\boldsymbol{i})\right]$ is the
probability of selecting an index point $\bm{k}\in \mathcal{I}_{l+1}(\boldsymbol{i})$ conditioning on that $\mathcal{I}_{l+1}(\boldsymbol{i})$ is selected at slot $t$,
and can be calculated as
\begin{equation*}
\label{eq:pro_arm}
\begin{split}
&\Pr\left[\boldsymbol{I}_{t}=\boldsymbol{k}|\boldsymbol{I}_{t}\in \mathcal{I}_{l+1}(\boldsymbol{i})\right] \\
=& \prod_{i=l+2}^{m}\Pr\left[\boldsymbol{I}_{t}\in \mathcal{M}_{i}(\boldsymbol{k})|\boldsymbol{I}_{t}\in \mathcal{M}_{i-1}(\boldsymbol{k})\right].
\end{split}
\end{equation*}
It is easy to see that $0\leq \bar{c}_{l+1,t}(\boldsymbol{i})\leq 1$.

The \emph{cumulative expected normalized cost} of a non-empty subset $\mathcal{K}_{l+1}(\boldsymbol{i})$ up to iteration $t$ is defined as
\begin{equation*}
\bar{C}_{l+1,t}(\boldsymbol{i}) \overset{\text{def}}{=} \sum_{\tau=1}^{t} \bar{c}_{l+1,\tau}(\boldsymbol{i}).
\end{equation*}

Our proposed Recursive Exponential Weighting online algorithm is shown in Algorithm \ref{alg:alg-rew}.
In iteration $t$,we recursively select the subsets in all layers.
Given that a non-empty subset $\mathcal{I}_{l}(\bm{s})$ in $l$ layer is chosen.
In layer $(l+1)$, we first get the cumulative expected normalized cost up to slot $(t-1)$ for each non-empty subset $\mathcal{I}_{l+1}(\bm{i}) \subset \mathcal{I}_{l}(\bm{s})$, i.e.,
$\bar{C}_{l+1,t-1}(\boldsymbol{i})$.
Then we choose the subset $\mathcal{I}_{l+1}(\bm{i})$ with probability proportional to $\exp\left(-\eta_t \bar{C}_{l+1,t-1}(\boldsymbol{i})\right)$.
Note that the denominator in \eqref{equ:alg-rew-pt} in Algorithm \ref{alg:alg-rew} is a normalizer such that $\bm{p}_t$ is
a probability density function. After selecting the subsets in all layers, we further update expected normalized cost for all subsets in all layers at iteration $t$.

\begin{algorithm}[htp]
\caption{Recursive Exponential Weighting (\rew) Online Algorithm}
\label{alg:alg-rew}
\begin{algorithmic}[1]
\Require
index set $\mathcal{I}$, $T$, $\{\eta_{t}=\frac{1}{\sqrt{t}}\}$
\Ensure
\State Set $\bar{C}_{l,0}(\boldsymbol{i}) = 0$, for any $l=1,2,\ldots,m$
\For{$t=1$ to $T$}
    \State $\boldsymbol{s}=\boldsymbol{1}$
    \State {\texttt{//Recursively select the subsets in all layers}}
    \For{$l=0$ to $m-1$}
        \State Select a non-empty subset $\mathcal{I}_{l+1}(\boldsymbol{i}) \subset \mathcal{I}_{l}(\boldsymbol{s})$ with probability
                \be
                p_{t}(\mathcal{I}_{l+1}(\boldsymbol{i})) = \frac{\exp\left(-\eta_{t} \bar{C}_{l+1,t-1}(\boldsymbol{i}) \right)}
                {\sum_{\boldsymbol{i}\in \mathcal{D}_{l}(\boldsymbol{s})}  \exp\left(-\eta_{t} \bar{C}_{l+1,t-1}(\boldsymbol{i}) \right)}
                \label{equ:alg-rew-pt}
                \ee
        \If{subset $\mathcal{I}_{l+1}(\boldsymbol{i})$ is selected}
           \State   Set $\boldsymbol{s} = \boldsymbol{i}$
        \EndIf
    \EndFor
    \State {\texttt{//Get the expected normalized cost for all subsets in all layers}}
    \For{$l=0$ to $m-1$}
        \For {each non-empty subset $\mathcal{I}_{l+1}(\boldsymbol{i})$ in layer-$(l+1)$}
            \State Calculate
            $\Pr\left[\boldsymbol{I}_{t}=\boldsymbol{k}|\boldsymbol{I}_{t}\in \mathcal{I}_{l+1}(\boldsymbol{i})\right]$ based on Equation (\ref{eq:pro_arm})
            \State Calculate
            \begin{equation*}
            \begin{split}
            &\bar{c}_{l+1,t}(\boldsymbol{i}) \\
            =& \sum_{\boldsymbol{k}\in\mathcal{I}_{l+1}(\boldsymbol{i})}\!\!\frac{c_{t}(\boldsymbol{k})\!-\!\min_{\boldsymbol{k}\in \mathcal{U}_{l}(\boldsymbol{i})}c_{t}(\boldsymbol{k})}{n2^{m-l}L_{\mathsf{d}}}\cdot\Pr\left[\boldsymbol{I}_{t}\!=\!\boldsymbol{k}|\boldsymbol{I}_{t}\!\in \! \mathcal{I}_{l+1}(\boldsymbol{i})\right]
            \end{split}
            \end{equation*}
        \EndFor
    \EndFor
\EndFor
\end{algorithmic}
\end{algorithm}

\subsection{Regret Analysis for \rew} \label{subsec:regret-rew}

To ease the analysis, we ``split'' the regret of the online algorithm into two parts.
The first part is the regret due to ``imperfect choosing'' over $\mathcal{I}$, \textit{i.e.},
\begin{equation*}
\textsf{regret}_{\textsf{ImC}}\overset{\text{def}}{=}\sup_{f_{1},\ldots,f_{T}\in \mathcal{F}}\left\{\sum_{t\in\mathcal{T}} c_t(\bm{I}_{t})-\min_{\boldsymbol{i}\in \mathcal{I}}\sum_{t\in\mathcal{T}} c_t(\bm{i})\right\},
\end{equation*}
where the first term is the cumulative cost incurred by the online algorithm (whose choice at time slot $t$ is denoted by $\bm{I}_{t}$), and the second term is the minimum cumulative cost among representative points.
The second part of the regret is from ``imperfect discretization'', which is represented as
\begin{equation*}
\textsf{regret}_{\textsf{ImD}}\overset{\text{def}}{=}\sup_{f_{1},\ldots,f_{T}\in \mathcal{F}}\left\{\min_{\boldsymbol{i}\in \mathcal{I}}\sum_{t\in\mathcal{T}} c_t(\bm{i}) - \min_{\boldsymbol{x}\in \mathcal{K}}\sum_{t\in\mathcal{T}} f_t(\bm{x})\right\},
\end{equation*}
where the second term is the minimum cumulative cost over decision set $\mathcal{K}$. Obviously, we have that
\begin{equation*}
\textsf{regret}\xspace_{T}\leq \textsf{regret}_{\textsf{ImC}}+\textsf{regret}_{\textsf{ImD}}.
\end{equation*}

We now show the regret of the \rew algorithm for the subproblem of choosing point over the index set $\mathcal{I}$.
\begin{lemma} \label{lem:regret-imc}
The \rew algorithm guarantees that
\begin{equation*}
\textsf{\emph{regret}}_{\textsf{\emph{ImC}}}\leq \left(4n^2+\frac{1}{2}n\right) 2^{m}L_{\mathsf{d}} \sqrt{T}+ 2n^{2}\cdot 2^{m}L_{\mathsf{d}}.
\end{equation*}
\end{lemma}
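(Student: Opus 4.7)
The plan is to analyze \rew layer-by-layer using a standard exponential-weighting (Hedge) potential argument within each layer, and then telescope across layers via the geometrically decaying Lipschitz normalization constants $n\,2^{m-l}L_{\mathsf{d}}$.

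First I would fix an offline optimum $\boldsymbol{k}^{*}\in\arg\min_{\boldsymbol{k}\in\mathcal{I}}\sum_{t=1}^{T}c_{t}(\boldsymbol{k})$ and, for each layer $l$, let $\boldsymbol{s}_{l}^{*}$ index the subset $\mathcal{M}_{l}(\boldsymbol{k}^{*})$ containing $\boldsymbol{k}^{*}$. For any layer-$l$ subset index $\boldsymbol{s}$, define the conditional expected cost $V_{t,l}(\boldsymbol{s}):=\mathbb{E}[c_{t}(\boldsymbol{I}_{t})\mid \boldsymbol{I}_{t}\in \mathcal{I}_{l}(\boldsymbol{s})]$. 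By the recursive tower property, $V_{t,l}(\boldsymbol{s})=\sum_{\boldsymbol{j}\in\mathcal{D}_{l}(\boldsymbol{s})}p_{t}(\mathcal{I}_{l+1}(\boldsymbol{j}))\,V_{t,l+1}(\boldsymbol{j})$; moreover $V_{t,0}(\mathbf{1})$ is the expected cost of \rew at round $t$ while $V_{t,m}(\boldsymbol{s}_{m}^{*})=c_{t}(\boldsymbol{k}^{*})$. Hence the per-round gap telescopes along the optimal path,
\begin{equation*}
V_{t,0}(\mathbf{1})-c_{t}(\boldsymbol{k}^{*})=\sum_{l=0}^{m-1}\bigl[V_{t,l}(\boldsymbol{s}_{l}^{*})-V_{t,l+1}(\boldsymbol{s}_{l+1}^{*})\bigr],
\end{equation*}
so summing over $t\in[T]$ reduces $\textsf{regret}_{\textsf{ImC}}$ to bounding each per-layer quantity separately.

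Second I would apply the standard Hedge analysis inside each layer. Restricted to the parent $\mathcal{I}_{l}(\boldsymbol{s}_{l}^{*})$, the conditional distribution $\{p_{t}(\mathcal{I}_{l+1}(\boldsymbol{j}))\}_{\boldsymbol{j}\in\mathcal{D}_{l}(\boldsymbol{s}_{l}^{*})}$ is precisely exponential weighting over at most $|\mathcal{D}_{l}(\boldsymbol{s}_{l}^{*})|\le 2^{n}$ experts with step size $\eta_{t}=1/\sqrt{t}$ and losses $\bar{c}_{l+1,t}(\boldsymbol{j})\in[0,1]$. Crucially, the subtracted offset $\min_{\boldsymbol{k}\in\mathcal{U}_{l}(\boldsymbol{j})}c_{t}(\boldsymbol{k})$ is identical for every sibling $\boldsymbol{j}$ and therefore cancels in any pairwise difference. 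The usual potential $\Phi_{t}=\tfrac{1}{\eta_{t}}\ln\sum_{\boldsymbol{j}}\exp(-\eta_{t}\bar{C}_{l+1,t}(\boldsymbol{j}))$, together with $\ln|\mathcal{D}_{l}(\boldsymbol{s}_{l}^{*})|\le n\ln 2\le n$ and $\sum_{t=1}^{T}\eta_{t}\le 2\sqrt{T}$, yields a normalized layer-$(l+1)$ regret of at most $2n\sqrt{T}+n$ against any fixed sibling, in particular $\boldsymbol{s}_{l+1}^{*}$. Multiplying by the unnormalization factor $n\,2^{m-l}L_{\mathsf{d}}$ and then summing with the geometric identity $\sum_{l=0}^{m-1}2^{m-l}=2^{m+1}-2\le 2\cdot 2^{m}$ gives exactly a bound of the form $(4n^{2}+\tfrac12 n)\,2^{m}L_{\mathsf{d}}\sqrt{T}+2n^{2}\cdot 2^{m}L_{\mathsf{d}}$ after collecting constants.

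The main obstacle is making the per-layer argument rigorous despite the algorithm's randomness. In a given realization \rew's sampled path need not visit $\mathcal{I}_{l}(\boldsymbol{s}_{l}^{*})$, yet the quantities $\bar{C}_{l+1,t-1}(\cdot)$ that drive the weights inside that subset are updated deterministically for \emph{every} non-empty subset at every layer (lines 11--15 of Algorithm~\ref{alg:alg-rew}). I therefore have to argue that the Hedge potential bound applies to the conditional distribution $p_{t}(\cdot\mid\boldsymbol{s}_{l}^{*})$ independently of what happens at higher layers, and that the cancellation of the parent minima is precisely what lets the layer bounds chain along the path to $\boldsymbol{k}^{*}$ without accumulating an extra $\log T$ or $\log N$ factor, which is what separates $O(\sqrt{T})$ from the earlier $O(\sqrt{T\log T})$ guarantee.
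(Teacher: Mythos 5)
Your proposal is correct and follows essentially the same route as the paper: a per-layer Hedge potential argument on the normalized costs $\bar{c}_{l+1,t}\in[0,1]$ with time-varying $\eta_t=1/\sqrt{t}$, chained along the path of subsets containing the optimal index $\boldsymbol{i}^*$, and summed via the geometric series $\sum_{l}2^{m-l}\le 2\cdot 2^m$. Your explicit telescoping identity over $V_{t,l}(\boldsymbol{s}_l^*)$ and your remark that the weights of every subset are updated regardless of the realized path are just cleaner statements of what the paper's Equation~(\ref{eq:subset-regret}) and its recursive application already do.
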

\begin{proof}
It is easy to see that the \rew online algorithm has a layered structure to determine the final decision. Suppose a non-empty subset $\mathcal{I}_{l}(\boldsymbol{i})$ is chosen at layer $l\in\{0,1,2,\ldots,m-1\}$. In the next step, \rew will further choose a subset whose index lies in $\mathcal{D}_{l}(\boldsymbol{i})$. Among the subsets of $\mathcal{I}_{l}(\boldsymbol{i})$, there exists a local optimal subset in hindsight and potentially a regret loss due to imperfect choosing at layer $l$ will be incurred by the online algorithm.
Equation (\ref{eq:subset-regret}) expresses the regret loss at the $l$-th layer.
\begin{figure*}
\begin{equation}
\label{eq:subset-regret}
\begin{split}
&\sum_{t\in \mathcal{T}}\mathbb{E}[c_{t}(\boldsymbol{I}_{t})|\boldsymbol{I}_{t}\in \mathcal{I}_{l}(\boldsymbol{i})] \\
=&\sum_{t\in \mathcal{T}}\sum_{\boldsymbol{j}\in \mathcal{D}_{l}(\boldsymbol{i})} \Pr[\boldsymbol{I}_{t}\in \mathcal{I}_{l+1}(\boldsymbol{j})|\boldsymbol{I}_{t}\in \mathcal{I}_{l}(\boldsymbol{i})] \mathbb{E}\left[c_{t}(\boldsymbol{I}_{t})|\boldsymbol{I}_{t}\in \mathcal{I}_{l+1}(\boldsymbol{j})\right] \\
\overset{(a)}{=}&\sum_{t\in \mathcal{T}}\sum_{\boldsymbol{j}\in \mathcal{D}_{l}(\boldsymbol{i})} \frac{\exp\left(-\eta_{t} \bar{C}_{l+1,t-1}(\boldsymbol{j})\right)}{\sum_{\boldsymbol{j}\in \mathcal{D}_{l}(\boldsymbol{i})}\exp\left(-\eta_{t} \bar{C}_{l+1,t-1}(\boldsymbol{j})\right)}\cdot \left[\bar{c}_{l+1,t}(\boldsymbol{j})\cdot n2^{m-l}L^{\mathsf{d}}+\min_{\boldsymbol{k}\in \mathcal{I}_{l}(\boldsymbol{i})}c_{t}(\boldsymbol{k})\right] \\
\overset{(b)}{\leq} &\sum_{t\in \mathcal{T}}\left[-\frac{1}{\eta_{t}}\ln \sum_{\boldsymbol{j}\in \mathcal{D}_{l}(\boldsymbol{i})}\frac{\exp\left(-\eta_{t}\bar{C}_{l+1,t-1}(\boldsymbol{j})\right)}{\sum_{\boldsymbol{j}\in \mathcal{D}_{l}(\boldsymbol{i})}\exp\left(-\eta_{t} \bar{C}_{l+1,t-1}(\boldsymbol{j})\right)}\exp\left(-\eta_{t} \bar{c}_{l+1,t}(\boldsymbol{j})\right)+\frac{\eta_{t}}{8}\cdot 1^{2}\right]\cdot n2^{m-l}L_{\mathsf{d}} \\
&~~~~~~~~~~~~~~~~~~~~~~~~~~~~~~~~~~~~~~~~~~~~~~~~~~~~~~~~~~~~~~~~~~~~~~~~~~~~~~~~~~~~~~~~~~~~~~~~~~~~~~~~+ \sum_{t\in \mathcal{T}}\min_{\boldsymbol{k}\in \mathcal{I}_{l}(\boldsymbol{i})}c_{t}(\boldsymbol{k})\\
=&\sum_{t\in \mathcal{T}}\left[-\frac{1}{\eta_{t}}\ln \sum_{\boldsymbol{j}\in \mathcal{D}_{l}(\boldsymbol{i})}\frac{\exp\left(-\eta_{t}\bar{C}_{l+1,t}(\boldsymbol{j})\right)}{\sum_{\boldsymbol{j}\in \mathcal{D}_{l}(\boldsymbol{i})}\exp\left(-\eta_{t} \bar{C}_{l+1,t-1}(\boldsymbol{j})\right)}+\frac{\eta_{t}}{8}\cdot 1^{2}\right]\cdot n2^{m-l}L_{\mathsf{d}} + \sum_{t\in \mathcal{T}}\min_{\boldsymbol{k}\in \mathcal{I}_{l}(\boldsymbol{i})}c_{t}(\boldsymbol{k})\\
\overset{(c)}{=} &\sum_{t\in \mathcal{T}}\left[\Phi_{t}(\eta_{t})-\Phi_{t-1}(\eta_{t}) +\frac{\eta_{t}}{8}\cdot 1^{2}\right]\cdot n2^{m-l}L_{\mathsf{d}} + \sum_{t\in \mathcal{T}}\min_{\boldsymbol{k}\in \mathcal{I}_{l}(\boldsymbol{i})}c_{t}(\boldsymbol{k})\\
\overset{(d)}{\leq} & \left\{\Phi_{T}(\eta_{T})+\sum_{t=2}^{T}\left[ n\left(\frac{1}{\eta_{t}}-\frac{1}{\eta_{t-1}}\right)+\frac{\eta_{t}}{8}\right]+\frac{1}{\eta_{1}}\ln |\mathcal{D}_{l}(\boldsymbol{i})|\right\}\cdot n2^{m-l}L_{\mathsf{d}} + \sum_{t\in \mathcal{T}}\min_{\boldsymbol{k}\in \mathcal{I}_{l}(\boldsymbol{i})}c_{t}(\boldsymbol{k})\\
\overset{(e)}{\leq} & \left\{\sum_{t\in \mathcal{T}}\mathbb{E}\left[\frac{c_{t}(\boldsymbol{I}_{t})-\min_{\boldsymbol{k}\in \mathcal{I}_{l}(\boldsymbol{i})}c_{t}(\boldsymbol{k})}{n2^{m-l}L_{\mathsf{d}}}|\boldsymbol{I}_{t}\in \mathcal{I}_{l+1}(\boldsymbol{j})\right] \right.\\
&~~~~~~~~~~~~~~~~~~~~~~~~~\left.+\frac{1}{\eta_{T}}n+\sum_{t=2}^{T}\left[ n\left(\frac{1}{\eta_{t}}-\frac{1}{\eta_{t-1}}\right)+\frac{\eta_{t}}{8}\right]+\frac{1}{\eta_{1}}\ln |\mathcal{D}_{l}(\boldsymbol{i})|\right\}\cdot n2^{m-l}L_{\mathsf{d}}+ \sum_{t\in \mathcal{T}}\min_{\boldsymbol{k}\in \mathcal{I}_{l}(\boldsymbol{i})}c_{t}(\boldsymbol{k})\\
\overset{(f)}{\leq} &\sum_{t\in \mathcal{T}}\mathbb{E}\left[c_{t}(\boldsymbol{I}_{t})|\boldsymbol{I}_{t}\in \mathcal{I}_{l+1}(\boldsymbol{j})\right]+\left(2n+\frac{1}{4}\right)\sqrt{T}\cdot n2^{m-l}L_{\mathsf{d}} + \frac{1}{\eta_{1}}\ln |\mathcal{D}_{l}(\boldsymbol{i})|\cdot n2^{m-l}L_{\mathsf{d}}
\end{split}
\end{equation}
\end{figure*}

In Equation (\ref{eq:subset-regret}), Equality (a) is based on the definition for $\bar{c}_{l+1,t}(\boldsymbol{j})$.
Inequality (b) is by Hoeffding's Lemma and the fact that $\bar{c}_{l+1,t}(\boldsymbol{j})\in [0,1]$.
Equality (c) simplifies the expression for the log-sum-exp function by defining
\begin{equation*}
\Phi_{t}(\eta_{t})=-\frac{1}{\eta_{t}}\ln \sum_{\boldsymbol{j}\in \mathcal{D}_{l}(\boldsymbol{i})}\exp\left(-\eta_{t}\bar{C}_{l+1,t}(\boldsymbol{j})\right).
\end{equation*}
Inequality (d) rearranges the first set of terms and uses the following bound results
\begin{equation*}
\sum_{t=1}^{T-1}[\Phi_{t}(\eta_{t})-\Phi_{t}(\eta_{t+1})] \leq \sum_{t=1}^{T-1}n\left(\frac{1}{\eta_{t+1}}-\frac{1}{\eta_{t}}\right)\leq n\sqrt{T}.
\end{equation*}
Inequality (e) uses the approximation results of a log-sum-exp function for the minimum value in a discrete set (see \cite{Chen2013}).
In Inequality (f), we use the result that
\begin{equation*}
\begin{split}
\sum_{t=1}^{T} \frac{\eta_{t}}{8}  < & \sum_{t=1}^{T} \frac{1}{4\left(\sqrt{t}+\sqrt{t+1}\right)}  \\
= & \sum_{t=1}^{T} \frac{1}{4}\left(\sqrt{t+1}-\sqrt{t}\right) \\
=& \frac{1}{4}\left(\sqrt{T+1}-1\right)\leq \frac{1}{4} \sqrt{T}. \nnb
\end{split}
\end{equation*}

Let $\boldsymbol{i}^{*}$ be the index of the arm of the smallest cumulative cost in the index set $\mathcal{I}$, \textit{i.e.},
\begin{equation*}
\boldsymbol{i}^{*} \overset{\text{def}}{=} {\arg\min}_{\boldsymbol{i}\in \mathcal{I}} \sum_{t\in \mathcal{T}}c_{t}(\boldsymbol{i}).
\end{equation*}
The following equation upper bounds $\textsf{regret}_{\textsf{ImC}}$.
\begin{equation*}
\begin{split}
&\textsf{regret}_{\textsf{ImC}} \\
=& \sum_{t\in \mathcal{T}}\mathbb{E}[c_{t}(\boldsymbol{I}_{t})]-\sum_{t\in \mathcal{T}}c_{t}(\boldsymbol{i}^{*}) \\
=& \sum_{t\in \mathcal{T}}\sum_{\boldsymbol{j}\in \mathcal{D}_{1}} \Pr[\boldsymbol{I}_{t}\in \mathcal{I}_{1}(\boldsymbol{j})] \mathbb{E}[c_{t}(\boldsymbol{I}_{t})|\boldsymbol{I}_{t}\in \mathcal{I}_{1}(\boldsymbol{j})] -\sum_{t\in \mathcal{T}}c_{t}(\boldsymbol{i}^{*})\\
\leq & \sum_{t\in \mathcal{T}}\mathbb{E}\left[c_{t}(\boldsymbol{I}_{t})|\boldsymbol{I}_{t}\in \mathcal{M}_{1}(\boldsymbol{i}^{*})\right] -\sum_{t\in \mathcal{T}}c_{t}(\boldsymbol{i}^{*}) +\left(2n+\frac{1}{4}\right)\sqrt{T}\cdot n2^{m}L_{\mathsf{d}} + \frac{1}{\eta_{1}}\ln |\mathcal{D}_{l}(\boldsymbol{i})|\cdot n2^{m}L_{\mathsf{d}} \\
\leq &n\left(2n+\frac{1}{4}\right)\sqrt{T}\cdot 2^{m}L_{\mathsf{d}} +n\left(2n+\frac{1}{4}\right)\sqrt{T}\cdot 2^{m-1}L_{\mathsf{d}}+\cdots+n\left(2n+\frac{1}{4}\right)\sqrt{T}\cdot L_{\mathsf{d}} + 2n^{2}\cdot 2^{m}L_{\mathsf{d}}\\
\leq &\left(4n^2+\frac{1}{2}n\right)\cdot 2^{m}L_{\mathsf{d}} \sqrt{T}+ 2n^{2}\cdot 2^{m}L_{\mathsf{d}}.
\end{split}
\end{equation*}

This completes the proof.
\end{proof}

Note that $L_{\mathsf{d}}=2\sqrt{n}DL/(2^m)$. Combined with Lemma \ref{lem:regret-imc}, we have
\begin{equation*}
\textsf{regret}_{\textsf{ImC}}\leq \left(8n^2\sqrt{n}+n\sqrt{n}\right)\cdot DL \sqrt{T}+ 4n^{2}\sqrt{n}\cdot DL.
\end{equation*}
This implies that $\textsf{regret}_{\textsf{ImC}}$ is always upper bounded by $O(\sqrt{T})$, no matter what value $m$ takes. On the other hand, the regret loss due to imperfect discretization can be reduced with larger $m$. Specifically,
\begin{equation*}
\textsf{regret}_{\textsf{ImD}}\leq \frac{\sqrt{n}}{2^{m}}LDT.
\end{equation*}

Based on the above results, we conclude our main results in Theorem \ref{the:regret-non-convex-problem}.
\begin{theorem} \label{the:regret-non-convex-problem}
The \rew algorithm guarantees that
\begin{equation*}
\textsf{\emph{regret}}_{T}\leq \left(8n^2\sqrt{n}+n\sqrt{n} \right)\cdot DL \sqrt{T}+\frac{\sqrt{n}}{2^{m}}DLT+ 4n^{2}\sqrt{n}\cdot DL.
\end{equation*}
With $m$ being set to $\ln \sqrt{T}$, we have
\begin{equation*}
\textsf{\emph{regret}}_{T}\leq \left(8n^2\sqrt{n}+n\sqrt{n} + \sqrt{n}\right)\cdot DL \sqrt{T}+ 4n^{2}\sqrt{n}\cdot DL.
\end{equation*}
\end{theorem}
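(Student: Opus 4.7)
The plan is to derive the theorem as a direct consequence of the regret decomposition $\textsf{regret}_T \leq \textsf{regret}_{\textsf{ImC}} + \textsf{regret}_{\textsf{ImD}}$ established at the start of Section~\ref{subsec:regret-rew}, together with Lemma~\ref{lem:regret-imc} and a short Lipschitz argument for the discretization term. Essentially all the heavy lifting has already happened inside Lemma~\ref{lem:regret-imc}; the theorem just repackages that estimate after picking $m$ to balance discretization against choosing.

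First I would justify $\textsf{regret}_{\textsf{ImD}} \leq \sqrt{n}\,DLT/2^m$. For any offline optimum $\bm{x}^\star \in \mathcal{K}$, the sub-cube $\mathcal{D}_{\bm{i}}$ containing $\bm{x}^\star$ has edge length $D/2^m$ and hence Euclidean diameter at most $\sqrt{n}D/2^m$; in particular the representative point $\bm{r}_{\bm{i}} \in \mathcal{D}_{\bm{i}} \cap \mathcal{K}$ satisfies $\|\bm{r}_{\bm{i}} - \bm{x}^\star\|_2 \leq \sqrt{n}D/2^m$. Applying the Lipschitz assumption \eqref{equ:cost-function-Lipschitz} at each round and summing over $t = 1,\dots,T$ yields the claim.

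Next I would plug $L_{\mathsf{d}} = 2\sqrt{n}\,DL/2^m$ into the bound of Lemma~\ref{lem:regret-imc}. The factor $2^m$ cancels against $1/2^m$ inside $L_{\mathsf{d}}$, producing
\begin{equation*}
\textsf{regret}_{\textsf{ImC}} \leq (8n^2\sqrt{n} + n\sqrt{n})\, DL\sqrt{T} + 4n^2\sqrt{n}\, DL,
\end{equation*}
which is independent of $m$. Adding this to the $\textsf{regret}_{\textsf{ImD}}$ bound of the previous paragraph gives the first displayed inequality of the theorem.

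Finally, to obtain the second inequality I would choose $m$ so that $2^m = \sqrt{T}$ (matching the statement's ``$m = \ln \sqrt{T}$'' when interpreted as $\log_2 \sqrt{T}$). With this choice the discretization contribution $\sqrt{n}\,DLT/2^m$ collapses to $\sqrt{n}\,DL\sqrt{T}$, which merges cleanly with the $\sqrt{T}$ coefficient inherited from $\textsf{regret}_{\textsf{ImC}}$. There is no genuine obstacle in this argument; the only non-mechanical step is the elementary diameter/Lipschitz estimate for $\textsf{regret}_{\textsf{ImD}}$, after which everything reduces to arithmetic and the standard balancing of two competing terms by choosing $m$.
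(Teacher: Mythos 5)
Your proposal is correct and follows essentially the same route as the paper: decompose the regret into $\textsf{regret}_{\textsf{ImC}}+\textsf{regret}_{\textsf{ImD}}$, invoke Lemma~\ref{lem:regret-imc} with $L_{\mathsf{d}}=2\sqrt{n}DL/2^{m}$ so the $2^{m}$ factors cancel, bound the discretization term by the Lipschitz/diameter estimate $\sqrt{n}DLT/2^{m}$, and set $2^{m}=\sqrt{T}$. The only difference is that you spell out the elementary diameter argument for $\textsf{regret}_{\textsf{ImD}}$, which the paper merely asserts, and you correctly read the paper's ``$m=\ln\sqrt{T}$'' as $\log_{2}\sqrt{T}$.
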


Theorem \ref{the:regret-non-convex-problem} implies that the \rew algorithm attains a regret of $O(n^2\sqrt{T})$, which is well known to be the lower bound of the regret \cite{Hazan16}.
\section{Numerical Results}
We proceed to test our algorithm on a numerical example in $\mathbb{R}^1$ with a class of piecewise cost functions defined in region $[-1,1]$. The form of the cost function is depicted in Figure \ref{fig:loss-function}.
\begin{figure}[h]
\center
  \begin{minipage}[t]{0.6\linewidth}
    \centering
    \includegraphics[width = \linewidth]{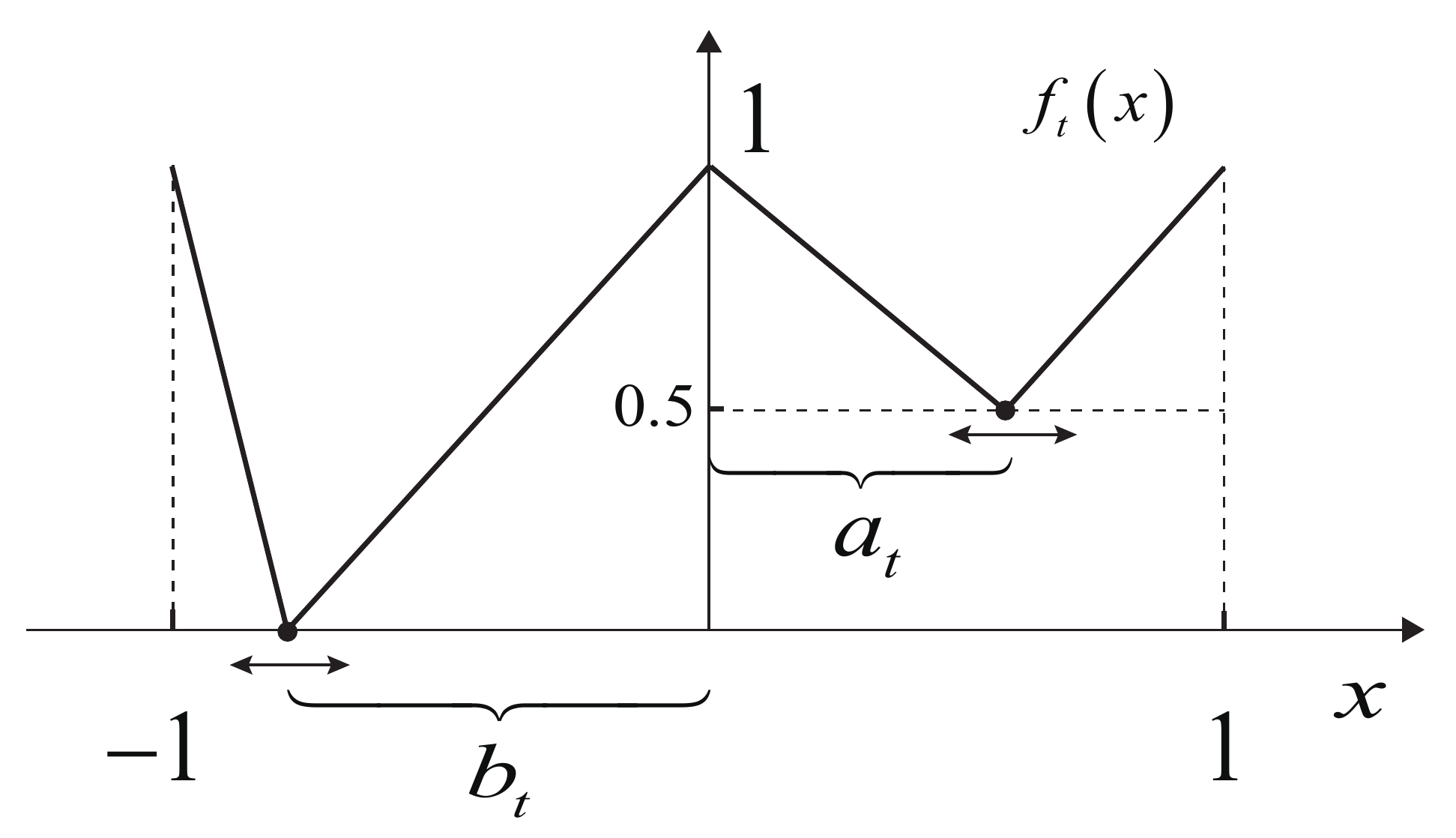}\\
  \caption{The loss function.}
  \label{fig:loss-function}
  \end{minipage}
\end{figure}

The loss function can be also expressed as the following.
\begin{equation*}
f_{t}(x)=\left\{
\begin{array}{ll}
1-\frac{1}{1-b_t}(x+1)& x\in[-1,-b_t), \\
\frac{1}{b_t}(x+b_t) & x\in [-b_t,0), \\
1-\frac{1}{a_t}(x) & x\in [0,a_t), \\
\frac{1}{1-a_t}(x-a_t) & x\in[a_t,1].
\end{array}\right.
\end{equation*}

As depicted in Figure \ref{fig:loss-function}, the minimum value of $f_t(x)$ over region $[-1,0]$ and $[0,1]$ is $0$ and $0.5$, respectively. $a_t$ and $b_t$ denote the positions of the two minimum values over region $[0,1]$ and $[-1,0]$.
In each time slot, the adversary can choose any value from $[\frac{1}{3},\frac{2}{3}]$ for $a_t$ and $b_t$.
$a_t$ or $b_t$ is unknown at the beginning of a time slot $t$ and will be revealed to the player after the commitment of a choice. As an example, we assume the adversary randomly chooses $a_t$ and $b_t$ at uniform property over $[\frac{1}{3},\frac{2}{3}]$.

For a simulation horizon of $T=3600$, we set the discretization parameter $m$ to be $8$, which is lager than $\ln \sqrt{T}$. We compare our online algorithm with the online Gradient Decent (OGD) method with initial point being $1$.
Figure \ref{fig:performance} shows the convergence performance of the \rew algorithm and the comparison algorithm. We report the ``empirical time-average regret'' as the performance metric, which is obtained by dividing the cumulative cost difference of each algorithm by the duration time. \rev{We only show the comparison results at the first 300 time slots.}

\begin{figure}[h]
\center
  \begin{minipage}[t]{0.7\linewidth}
    \centering
    \includegraphics[width = \linewidth]{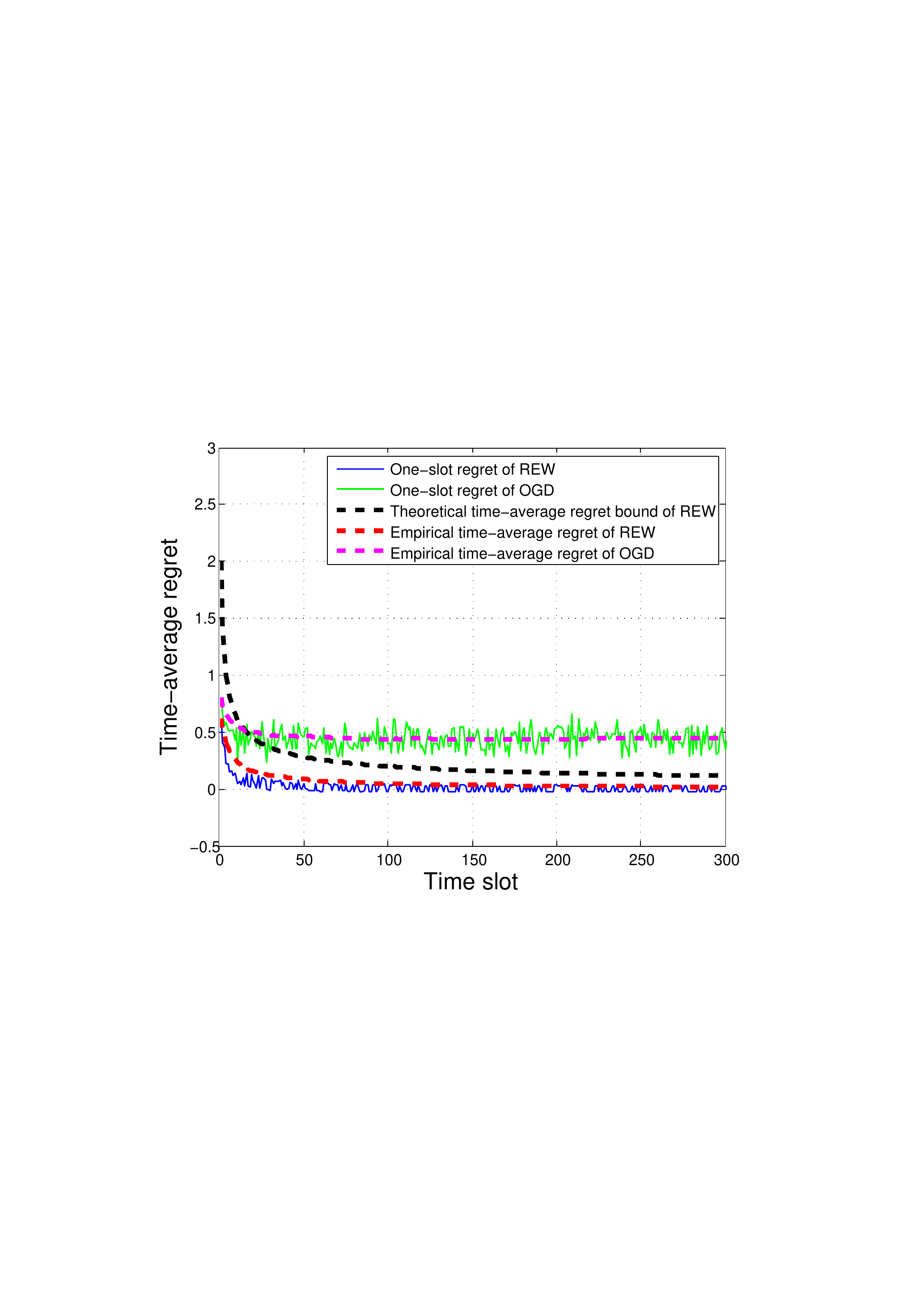}\\
  \caption{Time-average empirical regret and the theoretical bound.}
  \label{fig:performance}
  \end{minipage}
\end{figure}

There are two important observations in Figure \ref{fig:performance}. The first one is that the empirical convergence rate is far better than the regret bound. That is because the regret bound is achieved only under the worst-case inputs which are far more sophisticated than the ``average-case'' ones. The other observation is that the traditional online convex optimization method, like the Online Gradient Decent method, may lead to a sub-optimal solution. In our test, the cost function in each time slot is obtained by concatenating two convex functions in region $[-1,0]$ and $[0,1]$. We deliberately set the initial point to be $1$, and ultimately the the OGD algorithm converges its solution to a local optimum within the right half of the decision region.

\section{Discussions on Possible Extensions}

Recently, Hosseini \cite{Hosseini2016} and Lee \cite{Lee2017} generalized the classic OCO problem to a decentralized optimization framework within a network of agents. An interesting result of their work is that the $O(\sqrt{T})$ regret can be still attained by leveraging the communication among agents.
In addition to that, \cite{shahrampour2017distributed} addresses decentralized online optimization
in non-stationary environments using mirror decent, and in \cite{akbari2015distributed}, distributed online optimization is studied for strongly convex objective functions over time-varying networks.

Parallelly, a promising future work is to implement the proposed weighting method to solve the distributed online non-convex optimization problem. The extension is natural, while possible techniques to be adopted might be rather different. That is because the \rev{\rew} algorithm maintains the estimation for each subset and it might be very costly to exchange such information. Thus, in the opinion of the authors, the main issue of implementing such a weighting method within a decentralized environment is to alleviate the communication overhead among agents.

We should emphasize that our online non-convex problem is based on \emph{full information feedback}.
Namely, the whole cost function will be revealed at the ending time of each time slot. It is an interesting and important future direction to consider {\emph{partial information feedback} where only the cost value of the player's choice is revealed.
The {\emph{partial information feedback} extension is motivated by many real-world systems in which the observer is not co-located with the controller and the feedback information is noisy, partial or incomplete due to limited communication bandwidth.
Such are the cases in online routing in data networks \cite{Awerbuch08}, power control in cellular networks \cite{Wong04} and the ad placement problem on a web page \cite{Wauthier13}.
%

\cite{Flaxman05,Abernethy08,Saha11,Dekel15} studied the bandit information feedback setting in the OCO model with a sub-linear regret obtained, respectively. However, none of them have attained the lower bound of the regret. For the special case of strongly-convex and smooth losses, \cite{Agarwal10} obtained a regret of $\tilde{O}(\sqrt{T})$ in the unconstrained case, and \cite{Hazan14} obtained the same rate even in the constrained case. \cite{Shamir13} gave a lower bound of $O(\sqrt{T})$ for the setting of strongly-convex and smooth BCO. Recently, a new algorithm was reported by Hazan \textit{et al.} to attain a regret of $(\ln T)^{2d}\sqrt{T}$ \cite{Hazan16-1}. This is the first algorithm to attain a $\tilde{O}(\sqrt{T})$ regret for the OCO model with bandit feedback. \cite{kleinberg2013bandits} studied the model where the action set is from a metric space, and the payoff function satisfies a Lipschitz condition with
respect to the metric. Their results show that there is an algorithm whose regret on any instance satisfies $R(T)=\tilde{O}(T^{\frac{d+1}{d+2}})$ for every $T$,
where $d$ is the dimension of the action set.

Despite of the above results, the optimal online algorithm and tight regret bound for the online convex/non-convex optimization problem with bandit feedback are still open, calling for more investigation from the community.


\section{Conclusion}
In this paper, we investigated the online nonconvex optimization problem, which removes the convexity assumption of the cost functions as compared with the online convex optimization problem.
This generalization makes it far more challenging to design an efficient online algorithm of sublinear regret. Our results shows that by properly partitioning subsets and using the recursive exponential weighting method, the regret can be reduced to match the lower bound, $\sqrt{T}$.


\bibliographystyle{abbrv}

\begin{thebibliography}{10}

\bibitem{Abernethy08}
J.~Abernethy, E.~Hazan, and A.~Rakhlin.
\newblock Competing in the dark: An efficient algorithm for bandit linear
  optimization.
\newblock In {\em COLT}, pages 263--274, 2008.

\bibitem{Agarwal10}
A.~Agarwal, O.~Dekel, and X.~L.
\newblock Optimal algorithms for online convex optimization with multi-point
  bandit feedback.
\newblock In {\em COLT}, pages 28--40, 2010.

\bibitem{akbari2015distributed}
M.~Akbari, B.~Gharesifard, and T.~Linder.
\newblock Distributed online convex optimization on time-varying directed
  graphs.
\newblock {\em IEEE Transactions on Control of Network Systems}, 2015.

\bibitem{ardia2010differential}
D.~Ardia, K.~Boudt, P.~Carl, K.~M. Mullen, and B.~Peterson.
\newblock Differential evolution (deoptim) for non-convex portfolio
  optimization.
\newblock 2010.

\bibitem{Auer02}
P.~Auer, N.~Cesa-Bianchi, Y.~Freund, and R.~Schapire.
\newblock The nonstochastic multiarmed bandit problem.
\newblock {\em SIAM journal on computing}, 32(1):48--77, 2002.

\bibitem{Awerbuch08}
B.~Awerbuch and R.~Kleinberg.
\newblock Online linear optimization and adaptive routing.
\newblock {\em Journal of Computer and System Sciences}, 74(1):97--114, 2008.

\bibitem{cesa1997use}
N.~Cesa-Bianchi, Y.~Freund, D.~Haussler, D.~P. Helmbold, R.~E. Schapire, and
  M.~K. Warmuth.
\newblock How to use expert advice.
\newblock {\em Journal of the ACM (JACM)}, 44(3):427--485, 1997.

\bibitem{Chen2013}
M.~Chen, S.~Liew, Z.~Shao, and C.~Kai.
\newblock Markov approximation for combinatorial network optimization.
\newblock {\em IEEE Transactions on Information Theory}, 59(10):6301--6327,
  2013.

\bibitem{Cover91}
T.~Cover.
\newblock Universal portfolios.
\newblock {\em Mathematical finance}, 1(1):1--29, 1991.

\bibitem{Dekel15}
O.~Dekel, R.~Eldan, and K.~T.
\newblock Bandit smooth convex optimization: Improving the bias-variance
  tradeoff.
\newblock In {\em Advances in Neural Information Processing Systems (NIPS)},
  pages 2926--2934, 2015.

\bibitem{Ertekin2011}
S.~Ertekin, L.~Bottou, and C.~Giles.
\newblock Non-convex online support vector machines.
\newblock {\em IEEE Transactions on Pattern Analysis and Machine Intelligence},
  33(2):368--381, 2011.

\bibitem{Flaxman05}
A.~Flaxman, A.~Kalai, and H.~McMahan.
\newblock Online convex optimization in the bandit setting: gradient descent
  without a gradient.
\newblock In {\em Proceedings of the sixteenth annual ACM-SIAM symposium on
  Discrete algorithms (SODA)}, pages 385--394, 2005.

\bibitem{Gasso2011}
G.~Gasso, L.~Pappaioannou, M.~Spivak, and L.~Bottou.
\newblock Batch and online learning algorithms for nonconvex neyman-pearson
  classification.
\newblock {\em ACM Transactions on Intelligent Systems and Technology},
  2(3):28, 2011.

\bibitem{grove2001}
A.~J. Grove, N.~Littlestone, and D.~Schuurmans.
\newblock General convergence results for linear discriminant updates.
\newblock {\em Machine Learning}, 43(3):173--210, 2001.

\bibitem{Guzella09}
T.~Guzella and W.~Caminhas.
\newblock A review of machine learning approaches to spam filtering.
\newblock {\em Expert Systems with Applications}, 36(7):10206--10222, 2009.

\bibitem{Hazan16}
E.~Hazan.
\newblock Introduction to online convex optimization.
\newblock {\em Foundations and Trends in Optimization}, 2(3--4):157--325, 2016.

\bibitem{Hazan2007}
E.~Hazan, A.~Agarwal, and S.~Kale.
\newblock Logarithmic regret algorithms for online convex optimization.
\newblock {\em Machine Learning}, 69(2):169--192, 2007.

\bibitem{Hazan2012}
E.~Hazan and S.~Kale.
\newblock Online submodular minimization.
\newblock {\em Journal of Machine Learning Research}, 13(Oct):2903--2922, 2012.

\bibitem{hazan2014beyond}
E.~Hazan and S.~Kale.
\newblock Beyond the regret minimization barrier: optimal algorithms for
  stochastic strongly-convex optimization.
\newblock {\em Journal of Machine Learning Research}, 15(1):2489--2512, 2014.

\bibitem{Hazan14}
E.~Hazan and K.~Levy.
\newblock Bandit convex optimization: Towards tight bounds.
\newblock In {\em Advances in Neural Information Processing Systems (NIPS)},
  pages 784--792, 2014.

\bibitem{Hazan16-1}
E.~Hazan and Y.~Li.
\newblock An optimal algorithm for bandit convex optimization.
\newblock {\em arXiv preprint arXiv:1603.04350}.

\bibitem{Hosseini2016}
S.~Hosseini, A.~Chapman, and M.~Mesbahi.
\newblock Online distributed convex optimization on dynamic networks.
\newblock {\em IEEE Transactions on Automatic Control}, 61(11):3545--3550,
  2016.

\bibitem{kalai2002}
A.~Kalai and S.~Vempala.
\newblock Efficient algorithms for universal portfolios.
\newblock {\em Journal of Machine Learning Research}, 3(Nov):423--440, 2002.

\bibitem{kivinen1998}
J.~Kivinen and M.~K. Warmuth.
\newblock Relative loss bounds for multidimensional regression problems.
\newblock In {\em Advances in neural information processing systems (NIPS)},
  pages 287--293, 1998.

\bibitem{kleinberg2013bandits}
R.~Kleinberg, A.~Slivkins, and E.~Upfal.
\newblock Bandits and experts in metric spaces.
\newblock {\em arXiv preprint arXiv:1312.1277}, 2013.

\bibitem{Krichene15}
W.~Krichene, M.~Balandat, C.~Tomlin, and A.~Bayen.
\newblock The hedge algorithm on a continuum.
\newblock In {\em the 32nd International Conference on Machine Learning
  (ICML-15)}, pages 824--832, 2015.

\bibitem{krokhmal2002portfolio}
P.~Krokhmal, J.~Palmquist, and S.~Uryasev.
\newblock Portfolio optimization with conditional value-at-risk objective and
  constraints.
\newblock {\em Journal of risk}, 4:43--68, 2002.

\bibitem{Lee2017}
S.~Lee, A.~Nedich, and M.~Raginsky.
\newblock Stochastic dual averaging for decentralized online optimization on
  time-varying communication graphs.
\newblock {\em IEEE Transactions on Automatic Control}, 2017.

\bibitem{Littlestone1994}
N.~Littlestone and M.~Warmuth.
\newblock The weighted majority algorithm.
\newblock {\em Information and computation}, 108(2):212--261, 1994.

\bibitem{Maillard2010}
O.~Maillard and R.~Munos.
\newblock Online learning in adversarial lipschitz environments.
\newblock {\em Journal of Machine Learning Research}, pages 305--320, 2010.

\bibitem{quaranta2008robust}
A.~G. Quaranta and A.~Zaffaroni.
\newblock Robust optimization of conditional value at risk and portfolio
  selection.
\newblock {\em Journal of Banking \& Finance}, 32(10):2046--2056, 2008.

\bibitem{rakhlin2012making}
A.~Rakhlin, O.~Shamir, and K.~Sridharan.
\newblock Making gradient descent optimal for strongly convex stochastic
  optimization.
\newblock In {\em Proceedings of the 29th International Conference on Machine
  Learning (ICML-12)}, pages 449--456, 2012.

\bibitem{robbins1951}
H.~Robbins and S.~Monro.
\newblock A stochastic approximation method.
\newblock {\em The annals of mathematical statistics}, pages 400--407, 1951.

\bibitem{Saha11}
A.~Saha and A.~Tewari.
\newblock Improved regret guarantees for online smooth convex optimization with
  bandit feedback.
\newblock In {\em AISTATS}, pages 636--642, 2011.

\bibitem{Sculley07}
D.~Sculley and G.~Wachman.
\newblock Relaxed online svms for spam filtering.
\newblock In {\em Proceedings of the 30th annual international ACM SIGIR
  conference on Research and development in information retrieval}, pages
  415--422, 2007.

\bibitem{shahrampour2017distributed}
S.~Shahrampour and A.~Jadbabaie.
\newblock Distributed online optimization in dynamic environments using mirror
  descent.
\newblock {\em IEEE Transactions on Automatic Control}, 2017.

\bibitem{Shamir13}
O.~Shamir.
\newblock On the complexity of bandit and derivative-free stochastic convex
  optimization.
\newblock In {\em COLT}, pages 3--24, 2013.

\bibitem{shamir2013stochastic}
O.~Shamir and T.~Zhang.
\newblock Stochastic gradient descent for non-smooth optimization: Convergence
  results and optimal averaging schemes.
\newblock In {\em International Conference on Machine Learning}, pages 71--79,
  2013.

\bibitem{uryasev2000conditional}
S.~Uryasev.
\newblock Conditional value-at-risk: Optimization algorithms and applications.
\newblock In {\em Computational Intelligence for Financial Engineering,
  2000.(CIFEr) Proceedings of the IEEE/IAFE/INFORMS 2000 Conference on}, pages
  49--57. IEEE, 2000.

\bibitem{Wauthier13}
F.~Wauthier, M.~Jordan, and N.~Jojic.
\newblock Efficient ranking from pairwise comparisons.
\newblock In {\em International Conference on Machine Learning (ICML)}, pages
  109--117, 2013.

\bibitem{Wong04}
W.~Wong and C.~Sung.
\newblock Robust convergence of low-data rate-distributed controllers.
\newblock {\em IEEE transactions on automatic control}, 49(1):82--87, 2004.

\bibitem{zhang2005gene}
H.~H. Zhang, J.~Ahn, X.~Lin, and C.~Park.
\newblock Gene selection using support vector machines with non-convex penalty.
\newblock {\em bioinformatics}, 22(1):88--95, 2005.

\bibitem{Zhang15}
L.~Zhang, T.~Yang, R.~Jin, and Z.~Zhou.
\newblock Online bandit learning for a special class of non-convex losses.
\newblock In {\em AAAI}, pages 3158--3164, 2015.

\bibitem{Zinkevich03}
M.~Zinkevich.
\newblock Online convex programming and generalized infinitesimal gradient
  ascent.
\newblock In {\em Proceedings of the 20th International Conference on Machine
  Learning (ICML)}, pages 928--936, 2003.

\end{thebibliography}

\end{document}